\pgfplotsset{compat=1.18}
\newtheorem{thmprop}{Proposition}
\newtheorem{thmasmp}{Assumption}
\newtheorem*{thmrem*}{Remark}
\newtheorem*{thmprop*}{Proposition}
\newcommand{\red}[1]{\textcolor{red}{#1}}
\newcommand{\blue}[1]{\textcolor{blue}{#1}}
\newcommand{\green}[1]{\textcolor{green}{#1}}
\newcommand{\orange}[1]{\textcolor{orange}{#1}}
\def\E{\mathbb{E}}
\def\bx{{\bm{x}}}
\def\bX{{\bm{X}}}
\def\bbR{\mathbb{R}}
\def\cH{\mathcal{H}}
\def\cX{\mathcal{X}}
\def\cY{\mathcal{Y}}
\begin{document}



\title{Federated Learning with Heterogeneous and Private Label Sets}
\author{
Adam Breitholtz\footnotemark[1] \\
Chalmers University of Technology\\
\& University of Gothenburg \\
\texttt{adambre@chalmers.se}
\and
Edvin Listo Zec\thanks{Equal contribution.}\\
RISE Research Institutes of Sweden\\ KTH Royal Institute of Technology\\
\texttt{edvin.listo.zec@ri.se} 
\and 
Fredrik D. Johansson\thanks{ https://www.healthyai.se/} \\
Chalmers University of Technology\\
\& University of Gothenburg \\
\texttt{fredrik.johansson@chalmers.se} \\
}

\maketitle              

\begin{abstract}
Although common in real-world applications, heterogeneous client label sets are rarely investigated in federated learning (FL). Furthermore, in the cases they are, clients are assumed to be willing to share their entire label sets with other clients. Federated learning with   \emph{private} label sets, shared only with the central server, adds further constraints on learning algorithms and is, in general, a more difficult problem to solve. In this work, we study the effects of label set heterogeneity on model performance, comparing the public and private label settings---when the union of label sets in the federation is known to clients and when it is not. We apply classical methods for the classifier combination problem to FL using centralized tuning, adapt common FL methods to the private label set setting, and discuss the justification of both approaches under practical assumptions. Our experiments show that reducing the number of labels available to each client harms the performance of all methods substantially. Centralized tuning of client models for representational alignment can help remedy this, but often at the cost of higher variance. Throughout, our proposed adaptations of standard FL methods perform well, showing similar performance in the private label setting as the standard methods achieve in the public setting. This shows that clients can enjoy increased privacy at little cost to model accuracy.


%
%


%

\end{abstract}

\section{Introduction}

Federated learning (FL) enables collaborative model training across distributed clients without centralizing their private data \citep{mcmahan2017communication}. While promising, the effectiveness of FL is often challenged by statistical heterogeneity, where the data distributions vary significantly across clients. A particularly common and disruptive form of this is \emph{label shift}~\citep{li2021fedrs}, where the distribution of class labels differs from one client to another. Even more challenging is \emph{label set heterogeneity}~\citep{huang2024fedcrl}, where clients' local label sets are disjoint subsets of a global label set. 
%
%
%
%
%

In applications where access to instances of particular classes holds a competitive advantage, clients may be unwilling to reveal the identities of the classes they observe. Consider, for instance, a consortium of competing pharmaceutical companies wanting to train a model to predict which drug compounds individual patients will have adverse reactions to~\citep{edwards2000adverse,lavan2016predicting}. Each company has proprietary data on its own set of compounds and reactions, some of which are used by other clients in the federation, and some which are not. They are willing to collaborate to build a more powerful, generalizable model, but would never share the full list of compounds they classify with other clients, as this would reveal information about their R\&D pipeline. Instead, clients must communicate model updates with the central server pertaining  \emph{only} to their  \emph{private label set}.


Learning with heterogenous client label distributions has been tackled by several methods, including model distillation~\citep{gudur2020federated}, contrastive learning~\citep{huang2024fedcrl}, and latent space alignment based on class names~\citep{zhang2023navigating}. However, the literature is more sparse when considering label sets which are not identical across clients, although there are some works which consider it~\cite{li2021fedrs}. Moreover, classical methods for federated learning, adapted for label shift or otherwise, can not be applied directly with private label sets. In this setting, the models learned on the clients will necessarily be incompatible for regular aggregation since they will make classifiers for different label sets. Therefore, there is a need to develop methods to deal with this complication to ensure that learning is successful.



In this work, we investigate the effects of client label sparsity and heterogeneity on federated learning performance when client label sets are shared by the whole federation (public) and when they are unknown to other clients (private). We define the private label set problem in Section~\ref{sec:problem} and adapt popular FL model aggregation strategies for it in Section~\ref{sec:private}. We show that such methods are not well justified when client representations are poorly aligned and propose an alternative method based on the literature on classifier combination in Section~\ref{sec:alignment}, tuning the central classifier for heterogeneous client representations using an unlabeled dataset at the server. We conduct experiments in image classification on two data sets and show empirically how the sparsity and privacy of label sets affect performance~(Section \ref{sec:experiments}). We find that both the private and public label settings are more challenging when clients hold smaller and more diverse subsets of the global label set. Finally, in the private label setting, our proposed adaptations achieve comparable performance to methods for public labels, implying that clients can retain more privacy at little to no cost in accuracy.

 


\tikzstyle{fancytitle} =[minimum height= 0.75cm, fill=blue, text=white, ellipse]
\def\bar#1#2#3{\addplot [ybar, bar width=#3 cm, draw=black, 
        fill=#1
    ] coordinates {
        #2
    };} 
    \begin{figure}[t!]
        \label{fig:setup}
        \centering 
\hspace*{-2cm}\subfigure[\hspace*{-2cm}]
{%
 \begin{tikzpicture} 

\node[cloud, minimum height=0.5cm,minimum width=2cm,draw] (b) at (-3.5,0.75) {Server};

\node[minimum width=1cm,draw,circle] (c) at (-0.5,0) {$S_i$};
\node[minimum width=1cm,draw,circle] (a) at (-0.5,-1.5) {$S_j$};
\node[] (theta) at (-1.5,0.5) {$\theta_{t,i}$};
\node (d) at (-0.5,0.85) {$\vdots$};
\node (f) at (-0.5,-0.65) {$\vdots$};
\node[minimum width=1cm,draw,circle] (e) at (-0.5,1.5){$S_1$};
\node[anchor=south] (hist1_pub) at (1.25,0.9) { \begin{axis}[
    symbolic x coords={ -0.1, 0, 1, 2, 3, 4, 4.1},
    ymajorticks=false,
    xticklabels={,,},
extra x ticks={0,2,4}, 
extra x tick label={\null},
     xmin=-0.1,
    xmax=4.1,
     ymin=0,
    area style,
    width=3cm]
     \bar{black}{(0,0)}{0.2}
     \bar{red}{(1,40)}{0.2}
       \bar{blue}{(2,15)}{0.2}
        \bar{orange}{(3,10)}{0.2}
         \bar{green}{(4,5)}{0.2}
\end{axis}};
\node[anchor=south] (histi_pub) at (1.25,-0.55) { \begin{axis}[
    symbolic x coords={ -0.1, 0, 1, 2, 3, 4, 4.1},
    ymajorticks=false,
    xticklabels={,,},
extra x ticks={0,2,4}, 
extra x tick label={\null},
     xmin=-0.1,
    xmax=4.1,
     ymin=0,
    area style,
    width=3cm]
     \bar{black}{(0,30)}{0.2}
     \bar{red}{(1,0)}{0.2}
       \bar{blue}{(2,20)}{0.2}
        \bar{orange}{(3,18)}{0.2}
         \bar{green}{(4,0)}{0.2}
\end{axis}};
\node[anchor=south] (histj_pub) at (1.25,-2.1) { \begin{axis}[
    symbolic x coords={ -0.1, 0, 1, 2, 3, 4, 4.1},
     ymajorticks=false,
    xticklabels={,,},
extra x ticks={0,2,4}, 
extra x tick label={\null},
     xmin=-0.1,
    xmax=4.1,
    ymin=0,
    area style,
    width=3cm]
     \bar{black}{(0,5)}{0.2}
     \bar{red}{(1,25)}{0.2}
       \bar{blue}{(2,40)}{0.2}
        \bar{orange}{(3,0)}{0.2}
         \bar{green}{(4,15)}{0.2}
             
\end{axis}};
\node[anchor=south] (histi) at (-0.4,-0.55) { \begin{axis}[
    symbolic x coords={ -0.1, 0, 1, 2, 2.1},
     ymajorticks=false,
    xticklabels={,,},
extra x ticks={0,1,2}, 
extra x tick label={\null},
    xmin=-0.1,
    xmax=2.1,
    ymin=0,
    area style,
    width=3cm]
          \bar{blue}{(0,20)}{0.28}
          \bar{orange}{(1,18)}{0.28}
          \bar{black}{(2,30)}{0.28}
\end{axis}};
\node[anchor=south] (histj) at (-0.5,-2.1) { \begin{axis}[
    symbolic x coords={-0.1, 0, 1, 2, 3, 3.1},
    ymajorticks=false,
    xticklabels={,,},
extra x ticks={0,2,3}, 
extra x tick label={\null},
     xmin=-0.1,
    xmax=3.1,
     ymin=0,
    area style,
    width=3cm]
          \bar{green}{(0,15)}{0.22}
          \bar{black}{(1,5)}{0.22}
          \bar{blue}{(2,40)}{0.22}
          \bar{red}{(3,25)}{0.22}
\end{axis}};
\node[anchor=south] (hist1) at (-0.5,0.9) { \begin{axis}[
    symbolic x coords={-0.1, 0, 1, 2, 3,3.1},
     ymajorticks=false,
    xticklabels={,,},
extra x ticks={0,2,3}, 
extra x tick label={\null},
     xmin=-0.1,
    xmax=3.1,
     ymin=0,
    area style,
    width=3cm]
       \bar{green}{(0,5)}{0.22}
        \bar{red}{(1,40)}{0.22}
          \bar{orange}{(2,10)}{0.22}
          \bar{blue}{(3,15)}{0.22}
\end{axis}};

\node[anchor=south] (histserver) at (-5.5,-2) { \begin{axis}[
    symbolic x coords={ -0.1, 0, 1, 2, 3, 4, 4.1},
    ymajorticks=false,
    xticklabels={,,},
extra x ticks={0,2,4}, 
extra x tick label={\null},
     xmin=-0.1,
    xmax=4.1,
     ymin=0,
    area style,
    width=3.45cm]
     \bar{black}{(0,40)}{0.25}
     \bar{red}{(1,40)}{0.25}
       \bar{blue}{(2,25)}{0.25}
        \bar{orange}{(3,35)}{0.25}
         \bar{green}{(4,30)}{0.25}
\end{axis}};
\node (priv) at (1.25, -2.3) {\textbf{Private}};
\node (pub) at (3.1, -2.3) {\textbf{Public}};
 \node[minimum height=0.75cm,draw,rectangle] (h) at (-0.5,2.75) {~$( X_{tune}^i)_{i=1}^M$, $\cY=\{cat,\red{dog},\blue{bird},\orange{monkey},\green{pig}\}$~};
\draw[<->] (b) -- (a);
\draw[<->] (b) -- (c);
\draw[<->] (b) -- (e);
\draw[<->,] (b) -- (e);
\draw[->] (h) to[out=-135, in=135, looseness=1] (b);
\end{tikzpicture} 
}
\subfigure[]
    {
    \includegraphics[width=0.7\textwidth]{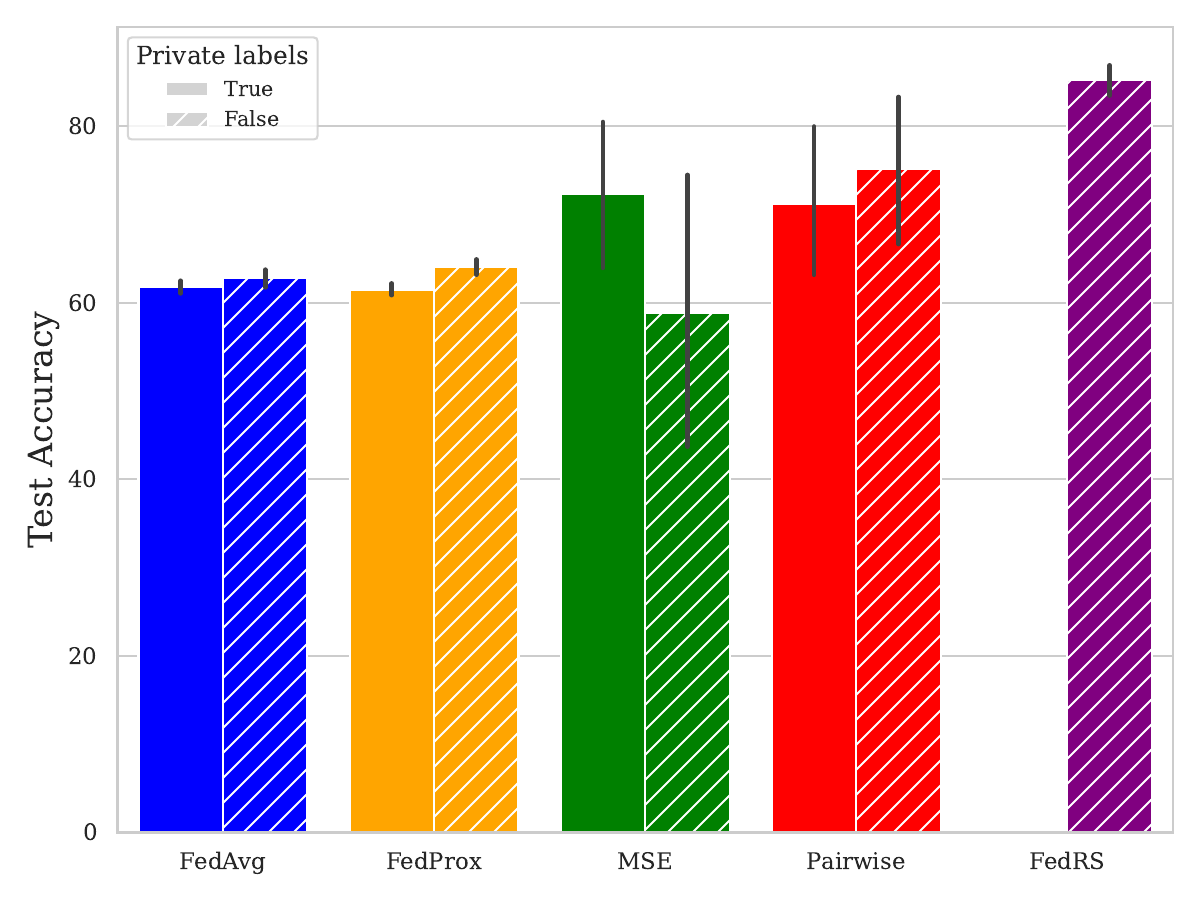}
    }
     \caption{\textbf{a): }A schematic view of the two settings which we consider in our work. The private setting where the clients are unaware of the full label set and the public setting where this is known. \textbf{b): }Results on CIFAR10 where each client has 5 labels available in their respective dataset. The tuning methods with MSE and Pairwise losses perform the best in the private setting. Errorbars represent a $95\%$ confidence interval. Note, FedRS is not applicable with private labels as it needs knowledge of the full label set.}
\end{figure}%
\section{Related Work}
The question of how to combine classifiers trained on disparate label sets have been studied in the binary setting previously in the context of centralized (non-federated) learning. See for example~\cite{wu2003probability} for a comparison of several methods which focuses on combining  binary classifiers based on the classifier probabilites. 

A similar line of work is the Open-set literature, where the label sets in the clients may be incomplete and the sets may not match between clients. In \cite{Dengmismatch23} they learn distribution estimators in the clients to approximate the overall label distribution using uncertainty of the global model. 

\cite{zhang2023navigating} deals with clients which do not share the same label set. They propose having the clients share the names of the labels and aligning the embedding of these names across the representations of clients. However, this requires sharing the names of the labels which may be undesirable, especially in a private label set setting. In a similar vein, \cite{li2021fedrs} restricts the softmax to account for the missing labels in the clients. The algorithm proposed hinges on knowing which labels a client has to account for which precludes its use in the private label set setting. 

Moreover, our setting is also related to shift in client label distributions. There are many works which aim to handle cases when there is a imbalance between client label distributions. This can be done by using regularization which penalizes large deviations in client updates~\cite{fedprox, li2021ditto} or using control variates to steer the learning~\cite{scaffold}. Other techniques include clustering clients with similar data distributions and training separate models for each cluster~\citep{ghosh2020efficient, sattler2020clustered, vardhan2024improved} and meta-learning to enable models to quickly adapt to new data distributions with minimal updates~\citep{chen2018federated, jiang2019improving}. However, these techniques are not adapted to the private label set setting.

Another related field is that of semi supervised federated learning where some works make use of unlabeled data in FL settings~\cite{jeong2021federated}. In these works the unlabeled data is usually available on the client side, which differs from our setting. 

\section{Problem setup}
\label{sec:problem}
We consider the problem of federated learning (FL) of a single  model $h$, trained to classify points $\bx \in \cX$ into classes $y \in \cY = \{1, ..., |\cY|\}$ given inputs $\bx \in \cX \subseteq \bbR^d$. A central server coordinates learning on $m$ clients, indexed by $k=1, ..., m$, each observing labeled data from an unknown distribution $p_k(\bX, Y)$. Central to our setting is that clients \emph{do not} have all labels in their data, i.e., clients $k$ are exposed only to a subset $\cY_k\subset \cY$ of labels.

Our goal is to learn a probabilistic classifier $h : \cX \rightarrow \Delta_\cY$, where $\Delta_\cY$ is the simplex over \emph{all} classes, that minimizes the expected prediction risk with respect to a loss function $L : \Delta_\cY \times \cY \rightarrow \bbR$,
\begin{equation}\label{eq:risk}
  \underset{h}{\text{minimize}} \;R(h) \quad R(h) \coloneqq \E[L(h(\bX), Y)]~.
\end{equation}
Here, the expectation $\E$ is defined over an unknown distribution $p(\bX, Y)$, assumed to be a convex combination of clients $p(\bX, Y) = \sum_{k=1}^m w_k p_k(\bX, Y)$ where $w \in \Delta_m$ assigns a weight to each client. In the classical FL setting, it is assumed implicitly that the weight is proportional to the number of samples $n_k$ held by the client, $w_k = n_k/(\sum_{k'=1}^m n_{k'})$, and the risk is computed over a distribution that matches the client aggregate, as exploited in the aggregation scheme of federated averaging (FedAvg)~\citep{mcmahan2017communication}. However, our methods can be adapted to targeted federated learning where $p$ cannot be expressed by a convex combination of clients~\citep{zec2024overcoming}.
The model $h(x)=\psi(\phi(x))$ typically consists of feature extractor $\phi$ and a classifier $\psi$, typically parameterized by a neural network with parameters $\theta_\phi$ and a linear-softmax classifier with parameters $\theta_\psi$, respectively. That is, $h_\theta(x) = \sigma(\theta_\psi ^\top \phi(x))$ where $\theta = (\theta_\phi, \theta_\psi)$.

We will consider two settings for the label set (illustrated in Figure \ref{fig:setup}): \begin{itemize} 
\item \textbf{Public labels}: All clients know the full global label set $\cY$. This is the standard FL setting but with emphasis on label set heterogeneity. 
\item \textbf{Private labels}: Each client know only their local label set $\cY_k$ and all communication with the central server is restricted to this set. That is, classifier parameters $\theta_\psi(y)$ for labels $y \not\in \cY_k$ are not shared with client $k$.
\end{itemize}
In both cases, the central server knows the full label set and the label sets of all clients to allow for tailoring communication to clients with heterogeneous and private label sets. We expand on methods for handle the private label case next. 

%
%
\section{Methods for heterogeneous label sets}
\label{sec:method}
When the label set is public, we can simply use existing FL methods to learn our classifiers. The issue of label set heterogeneity still remains, and aligning the models to combat any effects of misaligned representations may be warranted. However, in the case where the label set is private, some further modifications have to be made. We detail this and a method of tuning models for alignment in the following sections. 

\subsection{Model averaging with private label sets}
\label{sec:private}
The main challenge addressed in this work is \emph{private label set heterogeneity}: each client $k$ observes labels from a subset $\cY_k \subseteq \cY$ and are \emph{unaware} of other labels $\cY \setminus \cY_k$. Without loss of generality, we assume that for all clients $k$, every label in $\cY_k$ is observed with positive probability, $\forall y \in \cY_k : p_k(Y=y) > 0$, and other labels are unobserved, $\forall y \not\in \cY_k : p_k(Y=y) = 0$.

In the private setting, standard methods (e.g., FedAvg~\citep{mcmahan2017communication}, FedProx~\citep{fedprox}, FedRS~\citep{li2021fedrs}) cannot be applied without modification as clients do not have access to the full set of parameters $\theta_\psi$ of the shared classifier $\psi$. Moreover, the server can only receive updates from client $k$ to parameters concerning their subset of labels $\cY_k$. To overcome this obstacle, we propose a simple modification to common model averaging strategies that handles the lack of a full classifier by using the restricted classifiers and reweighting them.  

\begin{algorithm}[t]
\caption{FedAvg with private label sets}
\KwData{Client label sets $\{\cY_k\}$ and reverse indices $\{I_k\}$}
\KwResult{Classifier $h(x) = \sigma(\theta_\psi^\top \phi(x))$ }
Initialize central parameters $\theta^0 = (\theta^0_\phi, \theta^0_\psi)$  \\
\For{each round $t = 0, ..., T-1$}{
    \For{each client $k = 1, ..., m$}{
        Distribute $(\theta^t_\phi, \theta^t_\psi[\cY_k])$ to client $k$ \\
        Receive client update $(\theta^t_{\phi,k}, \theta^t_{\psi,k})$ \\
    }
    $\theta_{\phi}^{t+1} = \sum_{k=1}^m\theta^t_{\phi,k} \frac{n_k}{n} \quad \mbox{where} \quad n = \sum_{k=1}^mn_k$ \\
    \For{each label $y \in \cY$}{
       $\theta_{\psi}^{t+1}(y) =\sum_{k:y\in \cY_k}\theta^t_{\psi,k}(I_k(y)) \frac{n_k}{n'_y} \quad \mbox{where} \quad n'_y = \sum_{k:y\in \cY_k}n_k$ \\
    }
}
Return classifier with parameters $\theta = (\theta_{\phi}^T, \theta_{\psi}^T)$ \\
\label{alg:restricted_fedavg}
\end{algorithm}

\paragraph{Client-side modification} In each round $t$, each client $k$ is sent the full set of current encoder parameters $\theta^t_\phi$ and the subset of current classifier parameters corresponding to their label set, $\theta^t_{\psi}[\cY_k] \coloneqq [\theta^t_\psi(y) : y \in \cY_k]^\top \in \bbR^{|\cY_k|}$. Clients then proceed with local updates as normal. 

\paragraph{Server-side modification} In each round, $t$, the server receives parameter updates $(\theta^t_{\phi, k}, \theta^t_{\psi, k})$ from each client $k$ and averages the classifier parameters for each label $y$ based on the clients which have the label in their label set, weighted according to their sample size (see Algorithm \ref{alg:restricted_fedavg}). Encoder parameter updates $\theta_{\phi, k}^t$ are averaged as normal. 

Surprisingly, this simple method is well-justified under the softmax classifier model, provided that the classifier $h(x) = \sigma(\theta_\psi^\top \phi(x))$ is well-specified and clients' conditional label distributions (mechanisms) are what we call \emph{subset consistent}.
\begin{thmasmp}[Subset-consistent labeling mechanisms]\label{asmp:subset_covshift}
The labeling mechanisms of clients $k=1, .., n$, each with a distributions $p_k(\bX, Y)$ on a label set $\cY_k$, are \emph{subset-consistent} if the target label distribution $p(\bX, Y)$ satisfies
$$
\forall k, \bx : p_k(Y=y \mid Y\in \cY_k, \bX=\bx) = p(Y=y \mid Y\in \cY_k, \bX=\bx)~.
$$    
\end{thmasmp}

Now, suppose that $h_\theta(x)$ is well-specified for the true labeling function $p(Y\mid X)$ given an optimal encoder $\phi$, that is, there are parameters $\theta_\psi$ such that
$$
p(Y=y \mid X=x) = \frac{e^{-\theta_\psi(y)^\top \phi(x)}}{\sum_{y'}e^{-\theta_\psi(y')^\top \phi(x)}} = \sigma(\theta_\psi^\top \phi(x))_y~.
$$
Then, the subset-conditional outcome can be parameterized as a softmax classifiers with parameters $\theta_\psi[\cY_k]$, the subset of $\theta_\psi$ restricted to $\cY_k$,
\begin{align}
p(Y=y \mid X=x, Y \in \cY_k) & = \frac{p(Y=y \mid X=x)}{\sum_{y' \in \cY_k} p(Y=y' \mid X=x)} = \frac{e^{-\theta_y^\top \phi(x)}}{\sum_{y' \in \cY_k} e^{-\theta_{y'}^\top \phi(x)}} \nonumber \\
& = \sigma(\theta_\psi[\cY_k]^\top \phi(x))_y, \label{eq:fedavg_equiv}
\end{align}
since the normalization terms over the full label set cancel. As a result, the optimal model in this circumstance has the same parameters $\theta(y)$ both centrally and in all clients $k$ with $y\in \cY_k$. Consequently, given an optimal encoder $\phi(x)$ in the sense above, \emph{any} convex combination of unbiased estimates $\hat{\theta}_{\psi,k}$ of client-optimal parameters is unbiased for the server-optimal parameters $\theta$. Client weighting based on sample size (as in Algorithm~\ref{alg:restricted_fedavg}) achieves the largest effective sample size (smallest variance)~\citep{zec2024overcoming}.

\paragraph{Remark.} In the deterministic case, where $\forall \bx, \exists y^* : p(Y=y^* \mid \bX=\bx) = 1$, Assumption~\ref{asmp:subset_covshift} corresponds to the often-used \emph{covariate shift} assumption~\citep{shimodaira2000improving} since the event $Y\in \cY_k$ does not alter the distribution of $Y$ for a given on $\bx$. In this case, aggregating \emph{perfect} client models $h_k(y\mid x)$ is trivial for a given $x$, since all of them will return $1$ for the correct label. In general, for stochastic labels $p_k(y\mid Y\in \cY_k, \bx) \neq p_l(y\mid Y\in \cY_l, \bx)$ for $\cY_k \neq \cY_l$. In either case, Assumption~\ref{asmp:subset_covshift} allows both marginal distributions $p_k(\bX)$ and $p_k(Y)$ to vary with $k$.

Based on the simple modifications above, we can also adapt the FedProx~\cite{fedprox} algorithm and other centralized model-averaging strategies. For FedProx, we simply omit a comparison of the final layers in the regularization term on the client as their sizes do not match. 

The approach detailed in this section is by itself a viable method and will produce a classifier for all classes. However, when representations are not optimal for all clients at once, or when there isn't a single classifier that is optimal in all clients, the justification from \eqref{eq:fedavg_equiv} fails, and simply averaging client parameters. may not be the best strategy. We explore an alternative strategy next.

\subsection{Representation alignment by central tuning}
\label{sec:alignment}
The fundamental problem of federated learning is the aggregation of multiple client models into a single central model that is beneficial to the whole federation. The classical approach of parameter averaging, and its adaptation to private label sets above, is specific to a few model classes (e.g., neural networks) and poorly justified when the averaged representation is suboptimal for some clients. 
Stepping back, the aggregation problem may be viewed as a special case of classifier combination or couplings~\citep{wu2003probability,hastie1997classification}. Classifier combination methods were developed to combine several \emph{binary} classifiers, e.g., support vector machines, on different pairs of labels into a single multi-class classifier. Today, this technique is rare as multi-class classifiers are trained routinely using neural networks with softmax outputs or (ensembles of) decision trees. However, in federated learning with heterogenous and private label sets, we face the same problem again since no client nor the server has access to labeled data from all classes.




Traditionally, classifier combination operates on the classifier functions themselves not on their parameters. In our case, the classifiers are estimates of the conditional label probability $p_k(Y\mid \bX)$ specific to each client $k$ and their label sets $\cY_k$. It is appropriate to ask whether there exists a perfect combination of perfect client classifiers, one that yields minimal error on the target distribution $p(\bX, Y)$. To understand this, we draw inspiration from the binary-to-multi-class problem of classifier combination~\citep{hastie1997classification} and note an important distinction to our setting: usually, classifier combination applies to multiple classifiers trained on different subsets of the same data, or at least on data from the same distribution. This implies a structure between the probability distributions that the classifiers aim to fit. For example, with $\cY=\{0,1,2\}$, a binary classifier can be used to distinguish classes $0$ and $1$ by training on samples $(\bx,y)$ labeled with $y\in \{0,1\}$. By design, $p(Y=1 \mid Y \in \{0,1\}, \bX=\bx) = p(Y=1\mid \bX=\bx)/p(Y\in \{0,1\} \mid \bX=\bx)$. 

In general, the clients in federated learning may have completely unrelated label distributions. However, if we suppose again that Assumption \ref{asmp:subset_covshift} holds, a perfect combination of perfect classifiers may be found.
\begin{thmprop}[Perfect classifier combination]\label{prop:perfect_combo}%
Let Assumption~\ref{asmp:subset_covshift} hold for a set of clients $k=1, ..., m$ such that clients jointly cover all labels, $\cup_{k=1}^m \cY_k = \cY$. Then, the perfect central classifier $p(Y=y \mid \bX=\bx)$ can be aggregated from perfect client classifiers $\{p_k(Y=y \mid \bX=\bx)\}_{k=1}^m$.
\end{thmprop}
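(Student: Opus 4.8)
The plan is to reduce the claim to the classical \emph{coupling} problem from classifier combination~\citep{hastie1997classification,wu2003probability}, and to show it is solvable here precisely because Assumption~\ref{asmp:subset_covshift} forces the client classifiers to be consistent marginalizations of a single target conditional. First I would unpack what a ``perfect client classifier'' is: since $p_k$ puts all its mass on $\cY_k$, conditioning on $\{Y\in\cY_k\}$ is vacuous under $p_k$, so $p_k(Y=y\mid\bX=\bx)=p_k(Y=y\mid Y\in\cY_k,\bX=\bx)$, which by Assumption~\ref{asmp:subset_covshift} equals $p(Y=y\mid Y\in\cY_k,\bX=\bx)$ for $y\in\cY_k$ (and is $0$ otherwise). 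Writing $q_y(\bx):=p(Y=y\mid\bX=\bx)$ and $c_{k,y}(\bx):=p_k(Y=y\mid\bX=\bx)$, Bayes' rule gives, for every $y\in\cY_k$,
\[
c_{k,y}(\bx)=\frac{q_y(\bx)}{\sum_{y'\in\cY_k}q_{y'}(\bx)},
\]
which is exactly the coupling identity, with an arbitrary subset $\cY_k$ in place of the pairs used classically. So each client reveals the target conditional restricted to its own label set and renormalized.

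The second step is the actual aggregation. Fix $\bx$ and restrict attention to the support $\cS:=\{y:q_y(\bx)>0\}$ (labels outside $\cS$ are reported as $0$ by every client that holds them, hence identified). For any two labels $y,y'$ lying in a common $\cY_k$, the displayed identity gives $q_y(\bx)/q_{y'}(\bx)=c_{k,y}(\bx)/c_{k,y'}(\bx)$, a ratio computable from client $k$'s output. Form the label-overlap graph $G$ on $\cS$ with an edge $\{y,y'\}$ whenever some $\cY_k\supseteq\{y,y'\}$; each $\cY_k$ is a clique of $G$. Along any path in $G$ the corresponding ratios telescope, so for a fixed reference label $y_0$ one obtains known positive coefficients $\lambda_y$ with $q_y(\bx)=\lambda_y q_{y_0}(\bx)$ on the connected component of $y_0$ --- path independence is automatic because all ratios are formed from the same vector $q(\bx)$. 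The normalization $\sum_y q_y(\bx)=1$ then pins down $q_{y_0}(\bx)$, hence $q_y(\bx)=\lambda_y/\sum_{y'}\lambda_{y'}$, and ranging over $\bx$ reconstructs the central classifier $p(Y=\cdot\mid\bX=\cdot)$.

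The hard part is connectivity of $G$: the hypothesis $\cup_k\cY_k=\cY$ guarantees every label is covered but not that $G$ is connected (e.g.\ $\cY_1=\{1,2\}$, $\cY_2=\{3,4\}$ leaves the relative mass of $\{1,2\}$ versus $\{3,4\}$ undetermined by the conditionals alone). I see two ways to close this, and I would state whichever gives the cleanest proposition: (i) add connectivity of the label-overlap graph as an explicit hypothesis --- the natural analogue of the connectivity condition in classical coupling results --- under which the argument above is complete; or (ii) bring in the mixture structure $p=\sum_k w_k p_k$, so that with $\alpha_k(\bx):=w_k p_k(\bX=\bx)/\sum_{k'}w_{k'}p_{k'}(\bX=\bx)\in\Delta_m$ one has $p(Y=y\mid\bX=\bx)=\sum_{k:y\in\cY_k}\alpha_k(\bx)c_{k,y}(\bx)$, which removes the ambiguity once the aggregation weights and client input densities are available. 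I would prove the proposition under (i) and record (ii) as a remark. A minor point to dispatch along the way is well-definedness of the conditionals $p(\cdot\mid Y\in\cY_k,\bX=\bx)$, which I would simply restrict to $\bx$ with $p(Y\in\cY_k\mid\bX=\bx)>0$.
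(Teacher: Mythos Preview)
Your first step---deriving, from Assumption~\ref{asmp:subset_covshift}, the identity
\[
p_k(Y=y\mid Y\in\cY_k,\bX=\bx)=\frac{p(Y=y\mid\bX=\bx)}{\sum_{y'\in\cY_k}p(Y=y'\mid\bX=\bx)}
\]
---is exactly what the paper does, and in fact is essentially \emph{all} the paper does: its proof ends with the one-line remark that for every $(\bx,y)$ there is some $k$ with $p(y\mid\bx)=c(\bx)\,p_k(y\mid Y\in\cY_k,\bx)$ for a ``normalizing constant'' $c(\bx)$. The paper does not spell out an aggregation rule, nor does it address how the per-client constants (which really depend on $k$) are reconciled across clients.

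Your second step---the ratio/telescoping argument on the label-overlap graph---is a genuinely more careful route than the paper takes, and it buys you an explicit reconstruction procedure and an honest statement of what can go wrong. In particular, the connectivity issue you flag is real: with $\cY_1=\{1,2\}$, $\cY_2=\{3,4\}$ the conditionals alone do not fix the relative mass of the two blocks, and the paper's short proof does not close this gap either (its $c(\bx)$ is tacitly client-dependent). Your option (i), adding connectivity of the label-overlap graph, yields a clean and complete statement; option (ii) departs from ``aggregating client classifiers'' since it requires the mixture weights and client input densities. In short, your proposal is not merely an alternative to the paper's argument---it is a more rigorous version of it that surfaces an implicit assumption the paper leaves unstated.
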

\begin{proof}
By Assumption~\ref{asmp:subset_covshift}, for all clients $k$, inputs $\bx \in \cX$, and outputs $y \in \cY_k$,
\begin{align}
p_k(y \mid Y\in \cY_k, \bx) = \frac{p(y \mid \bx)}{p(Y \in \cY_k \mid \bx)} 
= \frac{p(y \mid \bx)}{\sum_{y'\in \cY_k}p(y' \mid \bx)}~. \label{eq:perfect_combo}
\end{align}
In other words, $\forall \bx,y, \exists k : p(y \mid \bx) = c(\bx) p_k(y \mid Y\in \cY_k, \bx)$ with $c(\bx)$ a normalizing constant.
\end{proof}
The result for softmax classifiers in \eqref{eq:fedavg_equiv} is a special case of this result.

In practice, of course, we cannot expect to have perfect models of each client to combine---especially not \emph{during} federated learning. However, Proposition~\ref{prop:perfect_combo} gives direction for what a good aggregated model should satisfy. Consider an estimated client model $h_k(y\mid \bx) \approx p_k(y \mid Y\in \cY_k, \bx)$ and a good central model $h(y \mid \bx) \approx p(y \mid \bx)$. By \eqref{eq:perfect_combo}, it should hold that,
$$
\forall k\in [m], y,y' \in \cY_k : h_k(y\mid \bx)h(y'\mid \bx) \approx h_k(y'\mid \bx)h(y\mid \bx)~.
$$
This is a generalization of the argument in \cite{wu2003probability} to multi-class subset classifiers. We may use this to construct an aggregation criterion for the central model $h$, given a set of client models $\{h_k(y \mid \cY_k, \bx)\}_{k=1}^m$, first for a fixed input $\bx$,
\begin{equation}\label{eq:obj_fix_x}
\underset{h_\bx \in \Delta_{\cY}}{\text{minimize}}\; \sum_{k=1}^m w_k \sum_{\substack{y, y' \in \cY_k\\ y\neq y'}} (h_k(y\mid \bx)h_\bx(y') - h_k(y'\mid \bx)h_\bx(y))^2 ~.
\end{equation}
If all client models are perfect, the minimizer of \eqref{eq:obj_fix_x} is a perfect central model at $\bx$ under the conditions of Proposition~\ref{prop:perfect_combo}. For high-dimensional or continuous $\bx$, it is not feasible to fit a separate central classifier to each possible input. Instead, we may use function approximation by fitting a classifier $h(y\mid \bX)$ from a class $\cH \subset \{h : \cX \rightarrow \Delta_\cY\}$ to minimize the expected error over $p(\bX)$.

\begin{equation}\label{eq:obj_exp}
\underset{h \in \cH}{\text{minimize}}\; \sum_{k=1}^m w_k \sum_{\substack{y, y' \in \cY_k\\ y\neq y'}} \E_\bX\left[ \big(h_k(y\mid \bX)h(y' \mid \bX) - h_k(y'\mid \bX)h(y \mid \bX)\big)^2 \right]
\end{equation}

We call this the \emph{pairwise} tuning loss and will use this as one of our objectives when combining classifiers centrally. In practice, the marginal distribution $p(\bX)$ is unknown and the expectation is intractable to compute, so we must solve $\eqref{eq:obj_exp}$ with respect to the empirical expectation $\hat{\E}[\bX]$ over a sample of data. Consequently, to use this method, we require that the central server has access to an \emph{unlabeled} data set of points $\bx_1, ..., \bx_m$ drawn from $p(\bX)$. Since access to tuning data is not required by methods based on parameter averaging, we must bear that in mind when comparing the empirical performance of the two approaches. 

For additional comparison, we also consider tuning-based classifier combination using the direct \emph{MSE} loss used in~\cite{wu2003probability}, 
\begin{equation}\label{eq:obj_MSE}
\underset{h \in \cH}{\text{minimize}}\; \sum_{k=1}^m w_k \sum_{\substack{y \in \cY_k}} \E_\bX\left[ \big(h_k(y\mid \bX) - h(y\mid \bX)\big)^2 \right] ~.
\end{equation}
In summary, we solve one of the two optimization problems above at each update to tune the classifier to be more aligned with the predictions of the client models. The tuned classifier is sent back to the clients and training proceeds as normal. 

\section{Experiments}
\label{sec:experiments}
We use the well-known datasets CIFAR-10~\citep{krizhevsky2009learning} and Fashion-MNIST~\citep{xiao2017fashion} for constructing our experiments. We perform ablations where we vary the number of labels that a client has access to from 2-10. 
This entails choosing a random set of labels for each client which they then get distributed from the dataset equally. This means that, absent further intervention, the clients will not have an identical amount of labeled examples across the ablation points. To control for this, we perform a subsampling step where we subsample the client dataset randomly to consistently have 2000 samples in each client. When evaluating the impact of tuning on an unlabeled dataset centrally (Section~\ref{sec:alignment}), we use an unlabeled dataset with 5000 samples for CIFAR10 and 6000 for FashionMNIST. We use the standard test set splits for both datasets, both have 10000 samples.

In the public label set setting, we use FedAvg, FedProx~\cite{fedprox} and FedRS~\cite{li2021fedrs} as baselines. In the private setting, we adapt FedAvg and FedProx to compare this approach to the central tuning (see Sections \ref{sec:private}--\ref{sec:alignment} for further details). As FedRS depends on knowledge of the full label set on the clients, we cannot use this method in the private setting.

 When performing central tuning, we train the server classifier for 3 epochs using one of two loss functions (Pairwise or MSE) in equations \eqref{eq:obj_exp} and \eqref{eq:obj_MSE}, respectively. The model aggregation then follows that of FedAvg (or our adaptation of FedAvg in the private setting).
We aggregate the results for different labels per client over 10 independent random seeds and the error bars denote a $95\%$ bootstrapped confidence interval over these splits. For the ablation over epochs per client, we aggregate over 3 random seeds. Further details, including the choice of hyperparameters for each method, can be found in Appendix \ref{app: exp_details}.
\subsection{Experimental Results}
We present the detailed results of our experiments below. For each method, we show the test accuracy of the model snapshot that achieves the highest validation accuracy during a run and then aggregate this across several seeds. More results, including an ablation over client epochs, can be found in Appendix \ref{app:additionalresults}. \newline


\textbf{CIFAR10:}
We present the results varying the amount of labels per client in Figure~\ref{fig:cifar_labelsperclient} with the specific case of 5 labels per client being shown in Figure \ref{fig:setup}b. We clearly see performance decreasing with decreasing number of labels. This effect is not due to decreasing sample size as that is fixed in the experiments. In the public label setting we see that FedRS performs the best while the tuning approach with the pairwise loss performs better than FedAvg and FedProx. Tuning with MSE loss seems to struggle here while, in the private setting, it performs the best. In the private setting, we can see that the tuning approach is superior to the adapted methods, although their variance is higher. Moreover, the MSE loss performs slightly better than the pairwise loss. Interestingly, the pairwise tuning seems to outperform FedAvg and FedProx in the public setting suggesting that the tuning of the representation can be of use in this setting also. This is likely because representational (mis)alignment can be an issue for federated learning whether labels are private or public. It is noteworthy that the adaptation of FedAvg and FedProx seem to exhibit a surprising robustness against the challenges of the private label sets, since they aggregate models from clients with disparate label sets. However, as the clients have an identical amount of labels, each of the feature extractors are trained to output the same label amounts which could help explain the robustness.
\begin{figure}[t!]
    \centering
    \includegraphics[width=\textwidth]{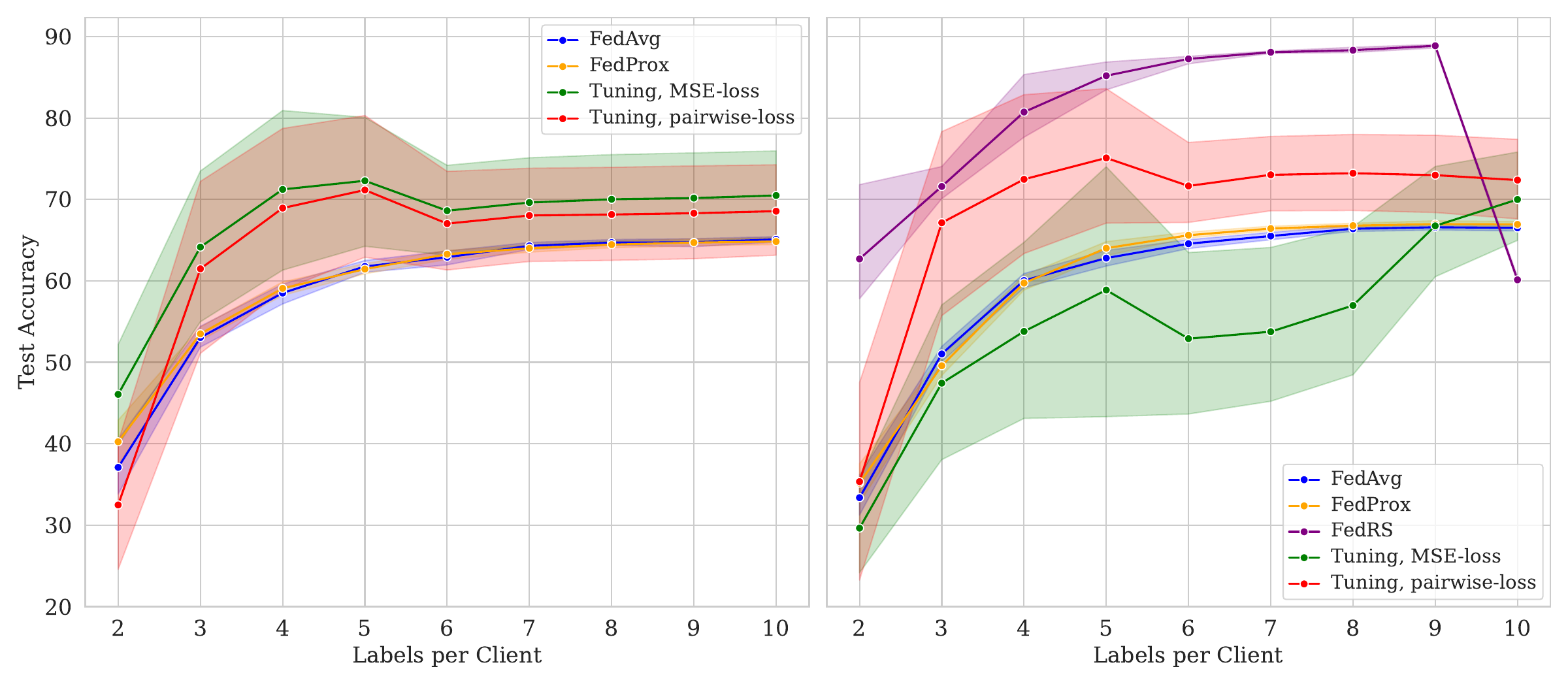}
    \caption{The performance of the methods in both the private (left) and public (right) settings on CIFAR10. Note that in the regular setting the pairwise loss performs better than FedAvg and the MSE loss while in the private setting the relationship is reversed.}
    \label{fig:cifar_labelsperclient}
\end{figure}

\textbf{Fashion-MNIST:}
As we can see in Figure~\ref{fig:fashion_labelsperclient}, the tuning methods do not outperform the adapted methods in the private setting on Fashion-MNIST. However, their large variance suggests that with more careful training they might perform at least equivalently. This may be due to the task being simple and an alignment of classifiers is unnecessary. We show results for 3 labels per client in Figure \ref{fig:fashion_3label}, where we see that the methods perform similarly in the private setting. %

The relationship that the pairwise loss performs better than the MSE loss in the public setting and worse in the private setting holds true here also. This could be due to the pairwise loss having cases where the loss is large due to lack of labels in clients. See Appendix \ref{app: pairwise_issue} for a discussion of this issue. We see a robustness of FedAvg and Fedprox to the private label sets here as well. This indicates that the adapted methods are a pragmatic choice that also works well in this restricted setting.

In the public setting, we see that the tuning methods underperform the other baselines, with FedAvg, FedProx, and FedRS all performing about equally well. This suggests that there is either limited misalignment between client representations or that it does not adversely affect performance. Instead, the central tuning seems to interfere with the successful learning during federation in the public setting, possibly due to increased variance in the central model. We can observe this variance in the figures over communication rounds in Appendix \ref{app:per_round}
\begin{figure}[t!]
    \centering
    \includegraphics[width=\textwidth]{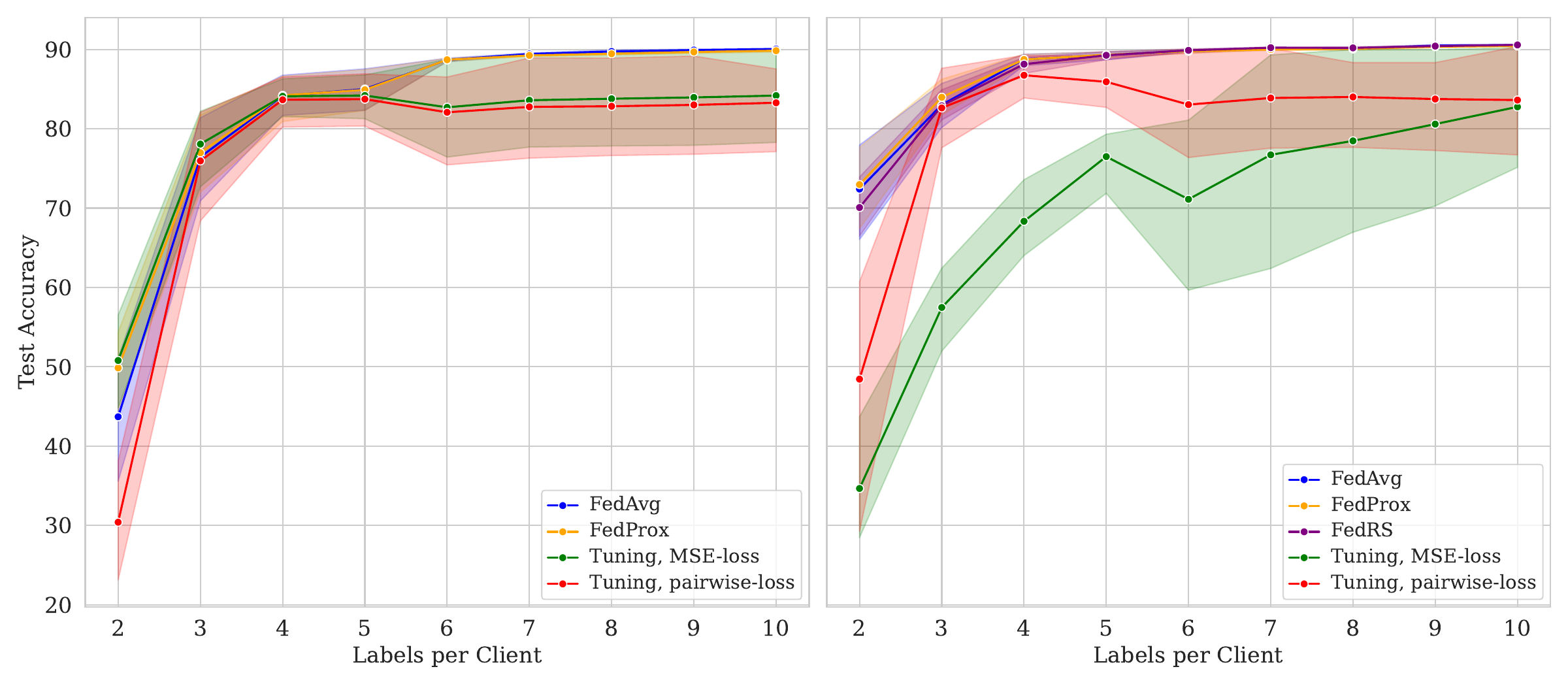}
    \caption{The performance of the methods in both the private and public settings on FashionMNIST. We note that the tuning approaches do not outperform the adapted methods in the private setting.}
    \label{fig:fashion_labelsperclient}
\end{figure}
\begin{figure}
    \centering
    \includegraphics[width=0.55\textwidth]{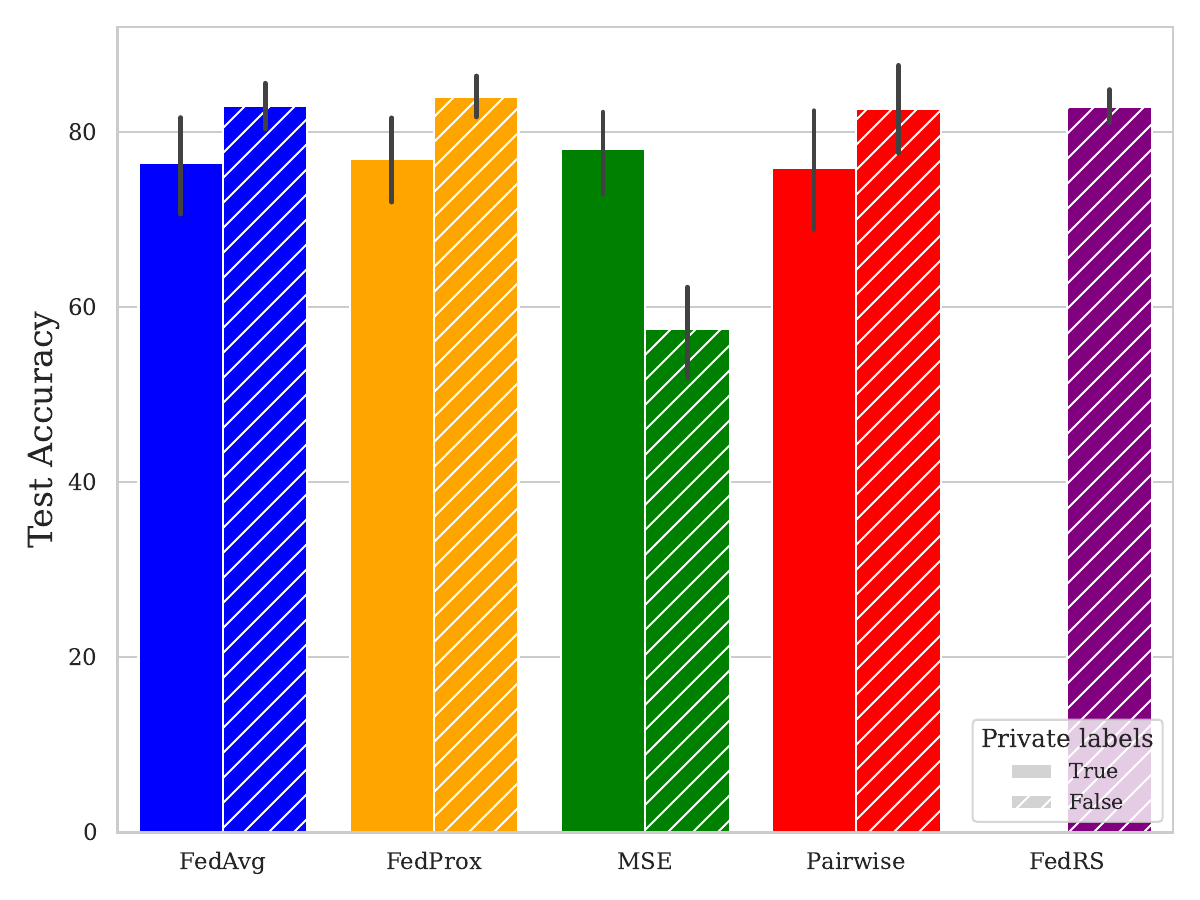}
    \caption{The results for the private and public settings for 3 labels per client on the FashionMNIST dataset.}
    \label{fig:fashion_3label}
\end{figure}

\section{Discussion}
This work investigated the impact of label set heterogeneity on Federated Learning, with a specific focus on the challenging and practical \textit{private label set} setting where clients are unaware of the global label space. Our experiments reveal several key insights into the behavior of both standard and adapted FL algorithms under these conditions.  

A primary finding is the surprising robustness of our adaptation of FedAvg to private label sets. We hypothesize that even when individual clients train on a small subset of labels, the shared feature extractor learns a common, semantically rich representation space. The simple aggregation strategy for the classifier weights (Algorithm 1), which combines knowledge on a per-class basis, proves to be a powerful and efficient method for stitching together these partial views into a coherent whole. This establishes adapted FedAvg as a formidable baseline, suggesting that complex alignment mechanisms may not always be necessary if there is sufficient label overlap across the client population.

Further, the tuning approaches do seem to work well for the private setting in some cases while performing worse than adapted methods in others. We also observe that the tuning yields better performance when the sparsity is more extreme. This may be due to the alignment problem being harder with an increasing amount of labels. Notably, we observed a performance reversal between the two tuning losses. In the public setting, the pairwise loss was superior, whereas in the more challenging private setting, the MSE loss performed better. We attribute this phenomenon to a critical vulnerability in the pairwise loss, detailed in Appendix B. When a label is globally absent from all participating clients in a round (a scenario far more likely with fewer labels per client) the pairwise loss generates problematic gradients by comparing against a class for which no client has information. The MSE loss, by directly comparing the global model's predictions to each client's predictions on their known classes, appears more resilient to this issue. It provides a more stable, albeit less constrained, learning signal in the face of extreme sparsity.

A key limitation of the tuning methods are the fact that solving the optimization problem for each communication round could become difficult computationally as the amount of labels, and the number of clients, increases. Also, the existence of unlabeled data is an additional burden to bear that may be impractical in application. The adapted methods do not share these limitations and could be an alternative if tuning methods are computationally infeasible or if there does not exist an unlabeled dataset centrally.

In future work, more realistic datasets could be considered which naturally exhibit label set heterogeneity. In addition, some other FL methods could perhaps be adapted to the private label set setting. Moreover, there could be further consideration of and comparison with other tuning losses. 

\bibliographystyle{plainnat}
\bibliography{references}

\begin{appendix}
\section{Experimental details}
\label{app: exp_details}
Here we detail the specifics of our experiments. In general we do a 80/20 train and validation split. For the tuning methods we train the classifier for three epochs over the unlabeled dataset each round. Both the central tuning and the client training uses the Adam optimizer with identical learning rates. 

\subsection{Model}
We use a simple CNN model for both experiments. 
\begin{table}[ht]
    \centering
    \begin{tabular}{c|c}
        Layer & \# Filters \\
        \hline
         Convolution& 3 \\
         Convolution & 3 \\
         $2\times 2$ Max pooling & - \\
         Convolution& 3 \\
         Convolution & 3 \\
         $2\times 2$ Max pooling & -\\
         Flatten & -\\
         Fully connected & - \\
         Dropout (0.5) &
    \end{tabular}
    \caption{Layers of the CNN used in the experiments.}
    \label{tab:cnn}
\end{table}

\subsection{Hyperparameters}
\begin{table}[ht]
    \centering
    \begin{tabular}{c|c}
         Learning rate & $1\times10^{-3}$ \\
         Batch size & 64\\
         Epochs per round & 1\\
         Number of communication rounds & 100\\
         FedProx $\mu$& $1\times10^{-2}$ \\
         FedRS $\alpha$ & 0.5 \\
    \end{tabular}
    \caption{hyperparameters used during training.}
    \label{tab:hyperparams}
\end{table}

\section{Effect of pairwise loss function with a missing label}
 \label{app: pairwise_issue}

Consider the loss function:
\[
\text{loss\_tensor} = (h_{k,y_j} \cdot p_i - h_{k,y_i} \cdot p_j)^2
\]
where:
\begin{itemize}
    \item \(h_{k,y_i}\) is the probability output by client model \(k\) for class \(i\) (mapped to the client's local label space).
    \item \(h_{k,y_j}\) is the probability output by client model \(k\) for class \(j\) (mapped to the client's local label space).
    \item \(p_i\) is the probability output by the global model for class \(i\).
    \item \(p_j\) is the probability output by the global model for class \(j\).
\end{itemize}

\textbf{Scenario:} A single label, label 9, is not present in any client's dataset.  All other labels (0-8) are present in at least one client.

\textbf{Consequences:}

\begin{enumerate}
    \item \textbf{Zero client probabilities for label 9:} Because no client has seen label 9, their models will output near-zero (or zero) probabilities for this label:
    \[
    h_{k,y_9} \approx 0 \quad \forall k
    \]

    \item \textbf{Problematic loss calculation when label 9 is involved:}  The loss calculation becomes problematic when either \(i=9\) or \(j=9\).  Let's analyze both cases:

    \begin{itemize}
        \item \textbf{Case 1: \(i = 9\)}
        \[
        \text{loss\_tensor} = (h_{k,y_j} \cdot p_9 - h_{k,y_9} \cdot p_j)^2 \approx (h_{k,y_j} \cdot p_9 - 0 \cdot p_j)^2 = (h_{k,y_j} \cdot p_9)^2
        \]
        The loss depends on the global model's probability for label 9 (\(p_9\)) and the client's probability for other labels \(j\).  The global model is penalized if \(p_9\) is non-zero, even though no client provides information about label 9.

        \item \textbf{Case 2: \(j = 9\)}
        \[
        \text{loss\_tensor} = (h_{k,y_9} \cdot p_i - h_{k,y_i} \cdot p_9)^2 \approx (0 \cdot p_i - h_{k,y_i} \cdot p_9)^2 = (h_{k,y_i} \cdot p_9)^2
        \]
       Similar to Case 1, the loss depends on \(p_9\) and client probabilities for other labels. The global model is penalized, and gradients related to label 9 are based on "noise" from the clients.

    \end{itemize}

    \item \textbf{Global model degradation:} The global model's representation for label 9 is negatively impacted. The loss pushes \(p_9\) towards zero because that's the only way to reduce the loss when paired with the near-zero client probabilities.  This harms the global model's ability to generalize, even for labels present in client data.

     \item \textbf{Unfair penalization:}  The global model receives gradients that are based on a comparison against the absent label, creating unstable behaviour.
     \end{enumerate}
     \section{Additional empirical results}
     \label{app:additionalresults}
     Here we show some additional results.
     \subsection{Test accuracy over communication rounds}
     \label{app:per_round}
     Here we present the test accuracy over time for the public and private settings. We compare FedAvg to the tuning methods we propose. We can see in Figures \ref{fig:fashion_over_private}---\ref{fig:cifar_over_public} that the convergence of the methods seem to occur at similar times in the training. To note is the increased variance in the tuning methods.
     \begin{figure}
         \centering
         \includegraphics[width=\linewidth]{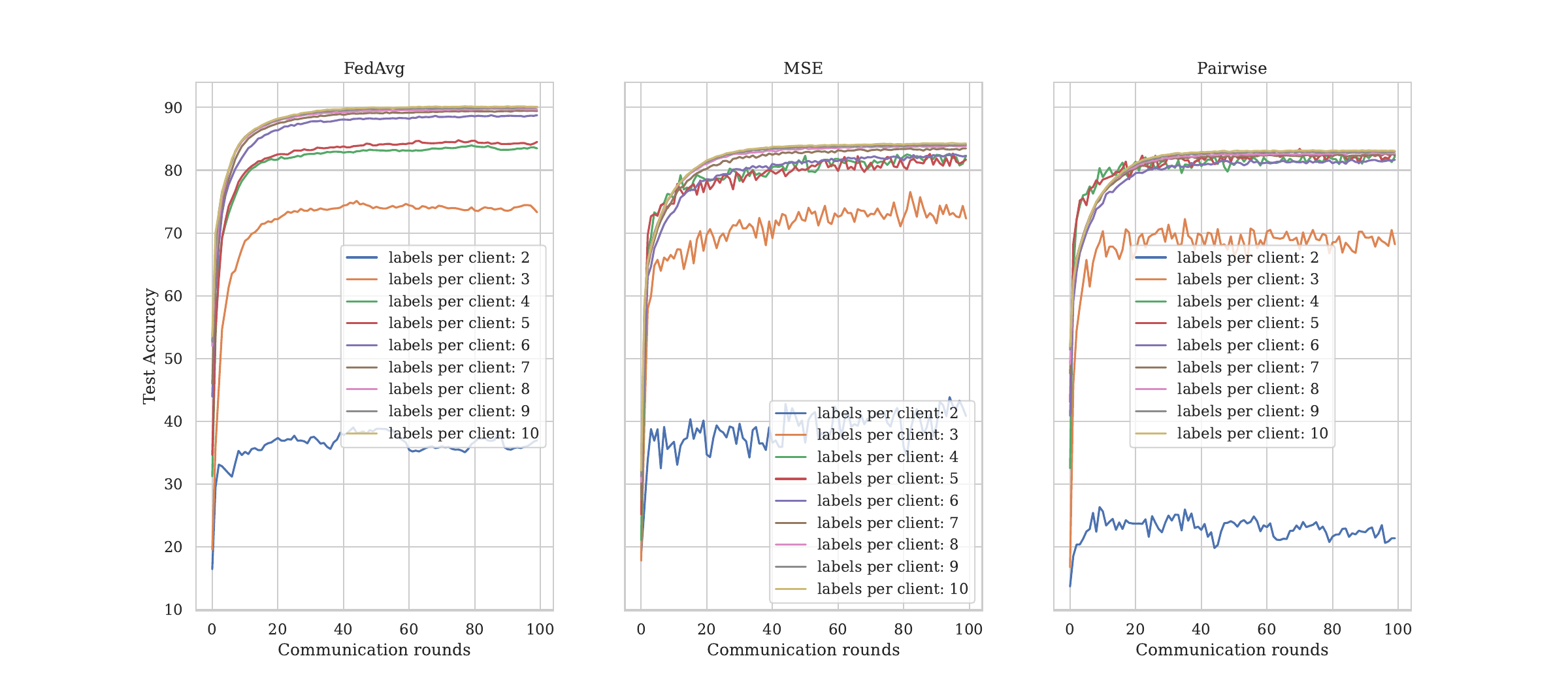}
         \caption{Test accuracy over rounds in the FashionMNIST task. The labels sets are private.}
         \label{fig:fashion_over_private}
     \end{figure}
       \begin{figure}
         \centering
         \includegraphics[width=\linewidth]{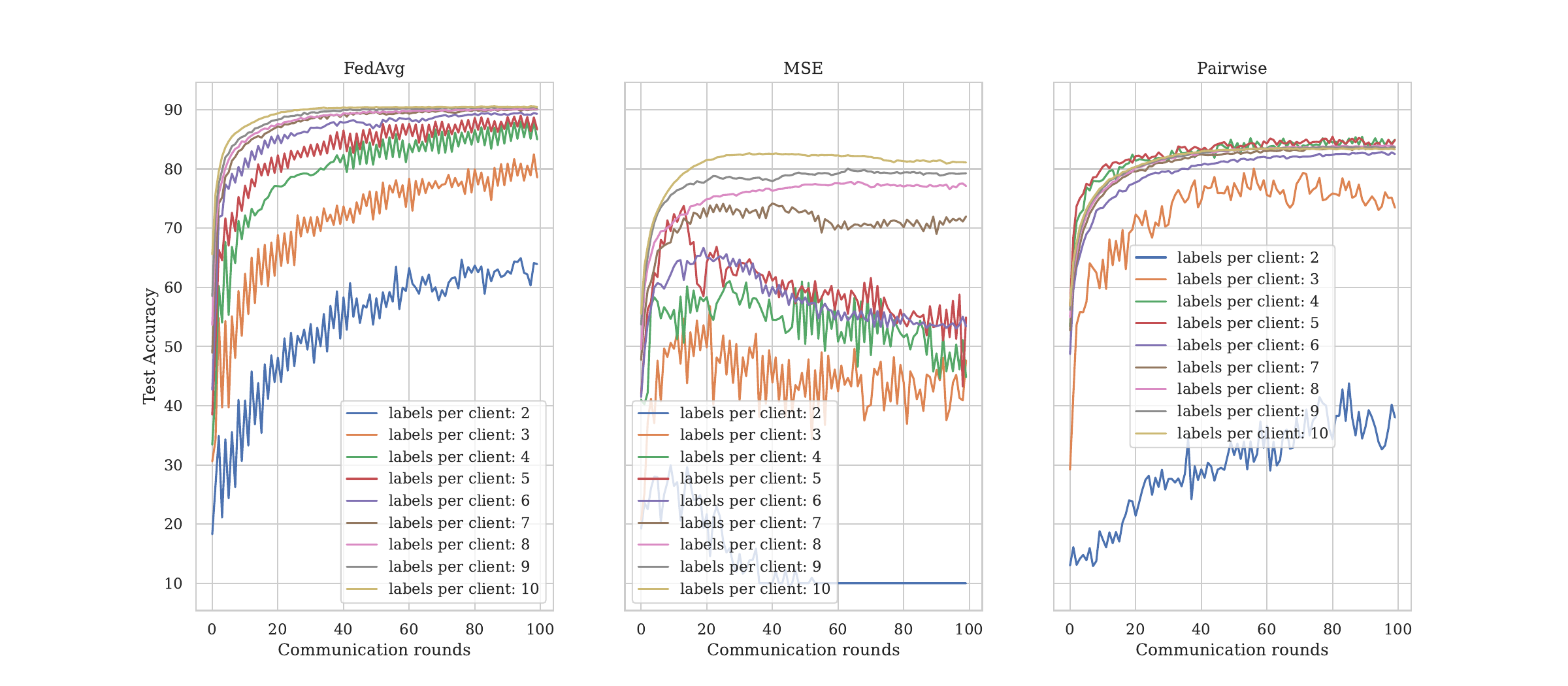}
         \caption{Test accuracy over rounds in the FashionMNIST task. The labels sets are public.}
         \label{fig:fashion_over_public}
     \end{figure}
       \begin{figure}
         \centering
         \includegraphics[width=\linewidth]{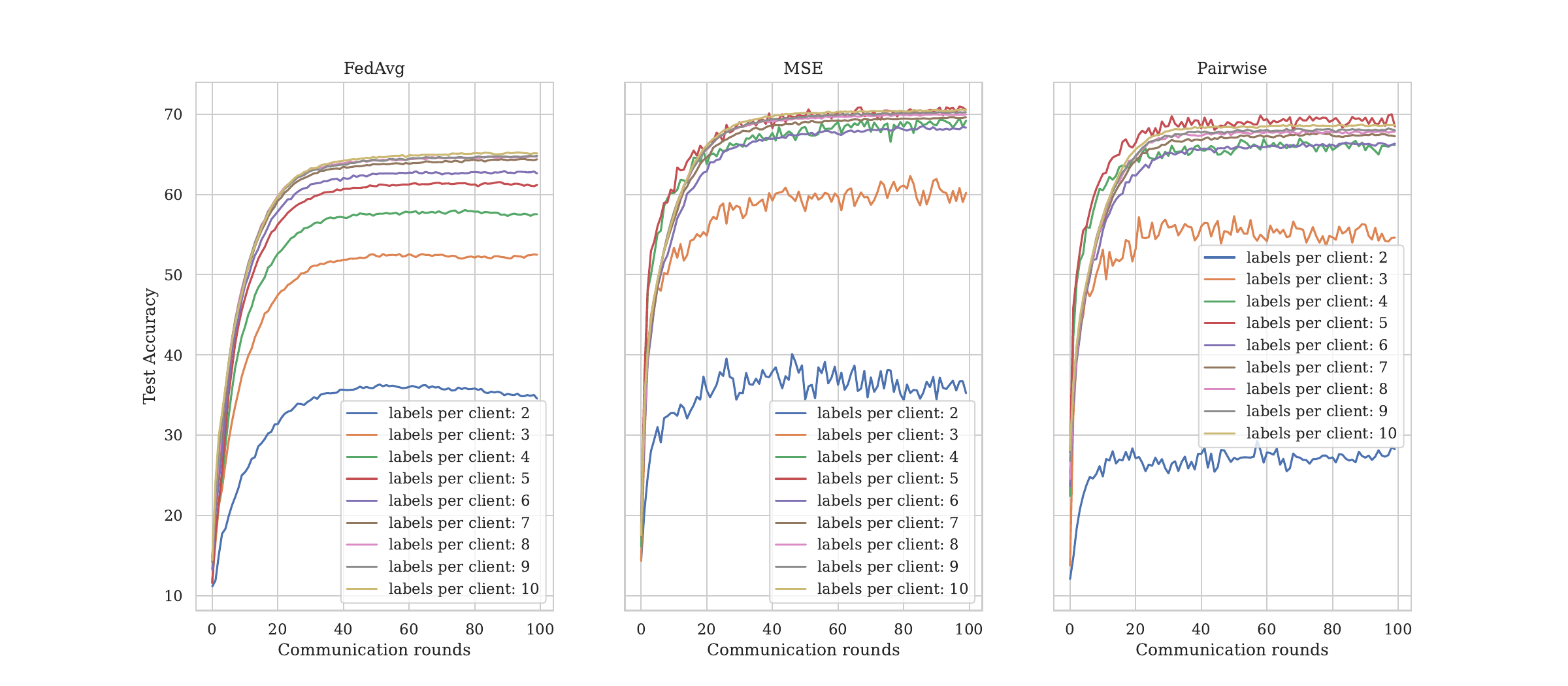}
         \caption{Test accuracy over rounds in the CIFAR10 task. The labels sets are private.}
         \label{fig:cifar_over_private}
     \end{figure}
       \begin{figure}
         \centering
         \includegraphics[width=\linewidth]{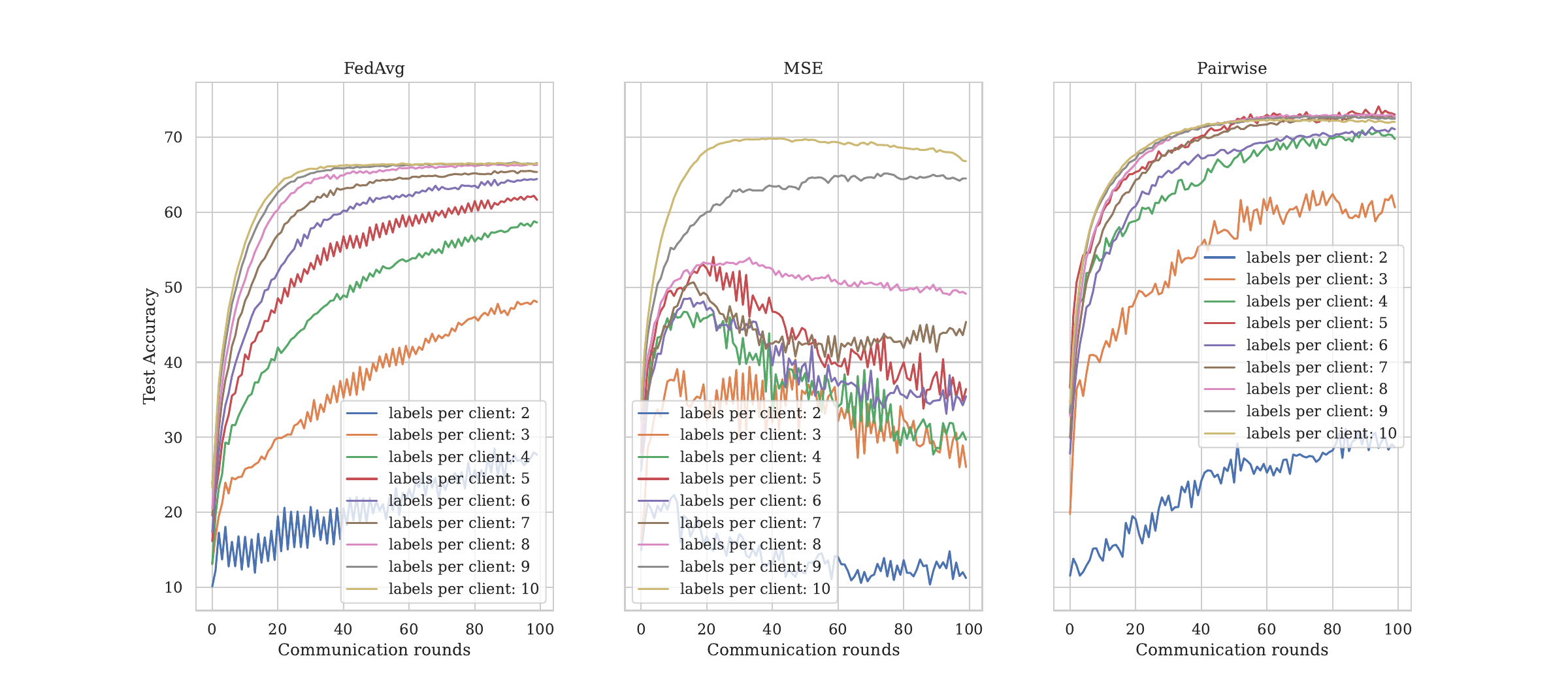}
         \caption{Test accuracy over rounds in the FashionMNIST task. The labels sets are Public.}
         \label{fig:cifar_over_public}
     \end{figure}
     \subsection{Labels per client barplots}
     Here are additional barplots to compare performance at different labels per client.
     \begin{figure}
    \centering
    \includegraphics[width=\textwidth]{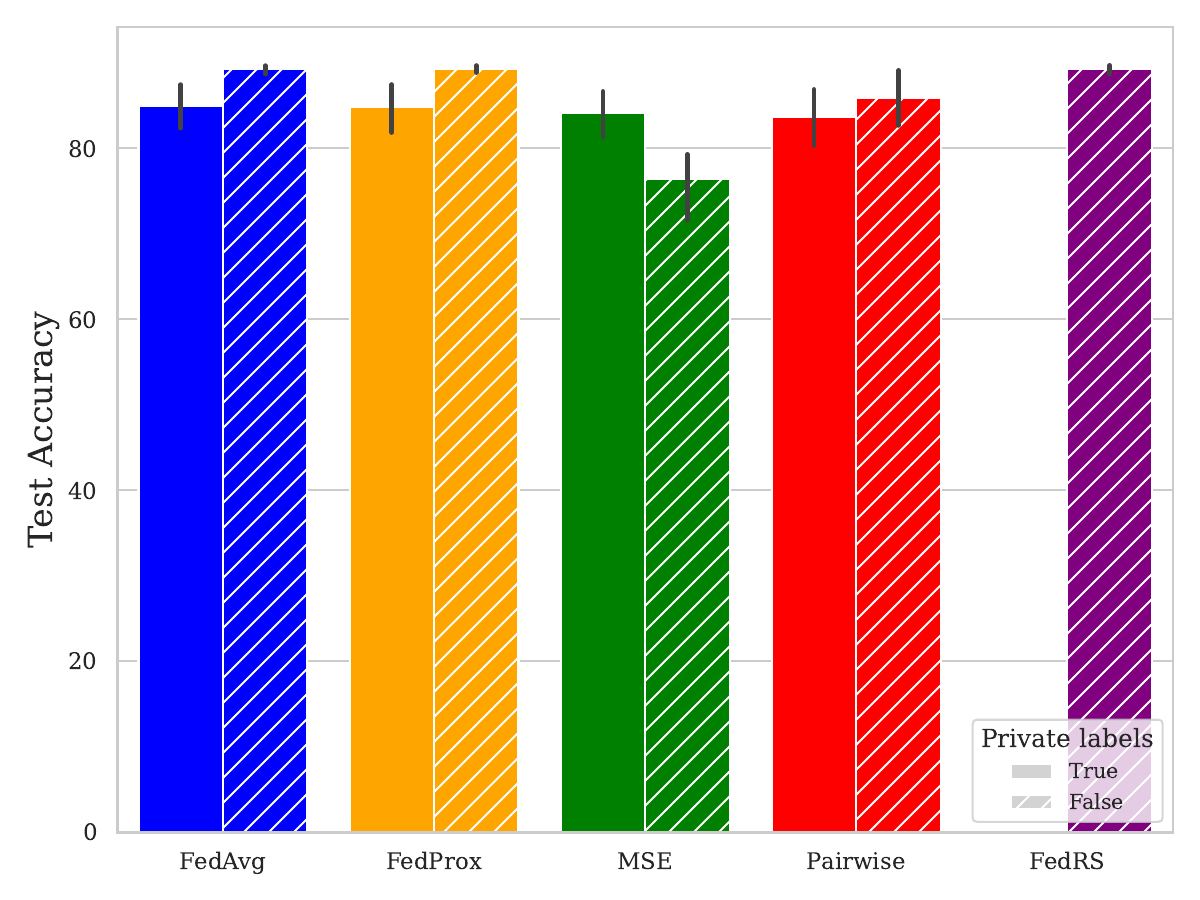}
    \caption{The results for the private and public settings for 5 labels per client on the FashionMNIST dataset.}
    \label{fig:fashion_5label}
\end{figure}
\begin{figure}
    \centering
    \includegraphics[width=0.8\textwidth]{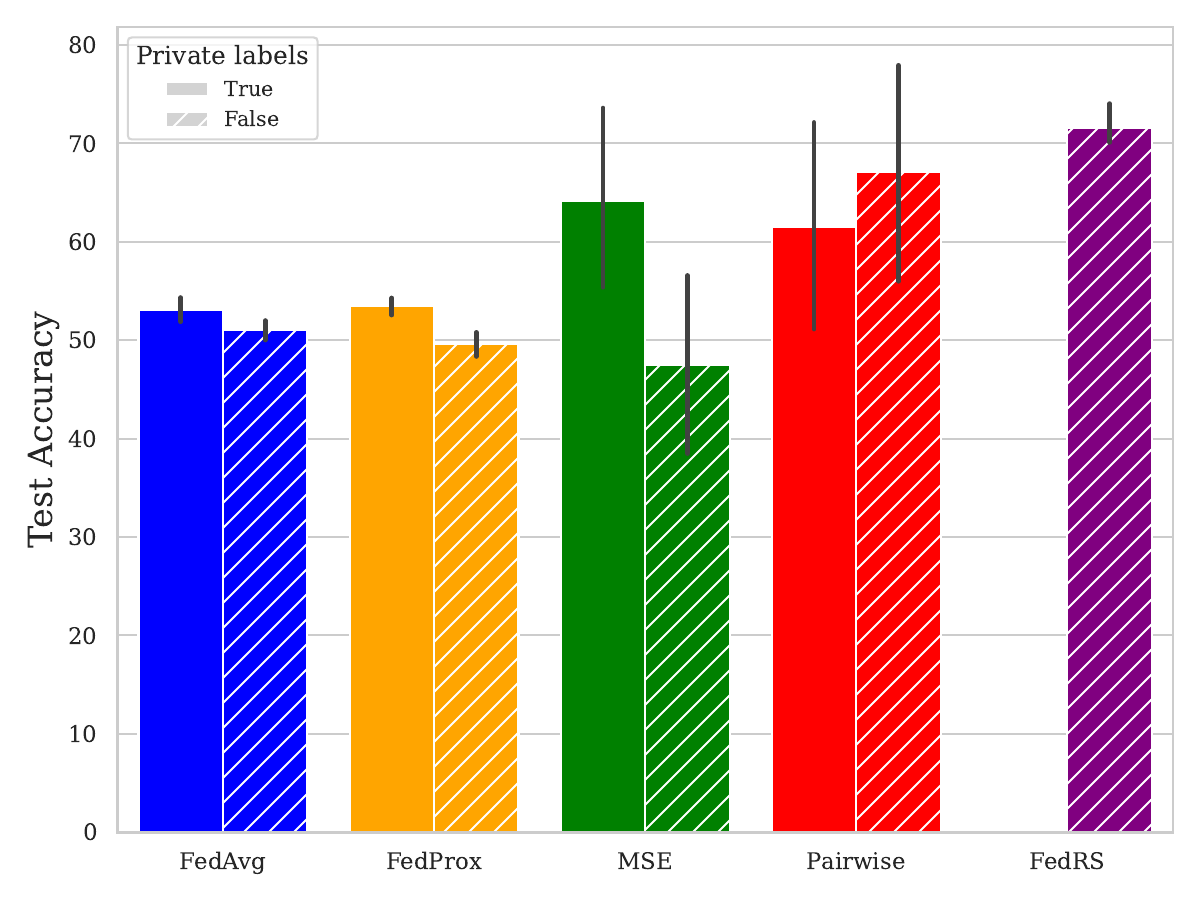}
    \caption{The results for the private and public settings with 3 labels per client on the CIFAR10 dataset.}
    \label{fig:cifar_3label}
\end{figure}

     \subsection{Local epochs}
     Here we present the results of experiments on CIFAR10 with varying labels per client in figures \ref{fig:cifar_localepochs}--\ref{fig:cifar_localepochs3}. Note, that the central tuning step is done here with a SGD optimizer and not Adam as in other experiments.
     We see that our ablations across different values of epochs per round do not seem to meaningfully impact performance. Furthermore, the same pattern of the MSE loss performing better in the private setting than the pairwise is seen here. 
     \begin{figure}
    \centering
    \includegraphics[width=\textwidth]{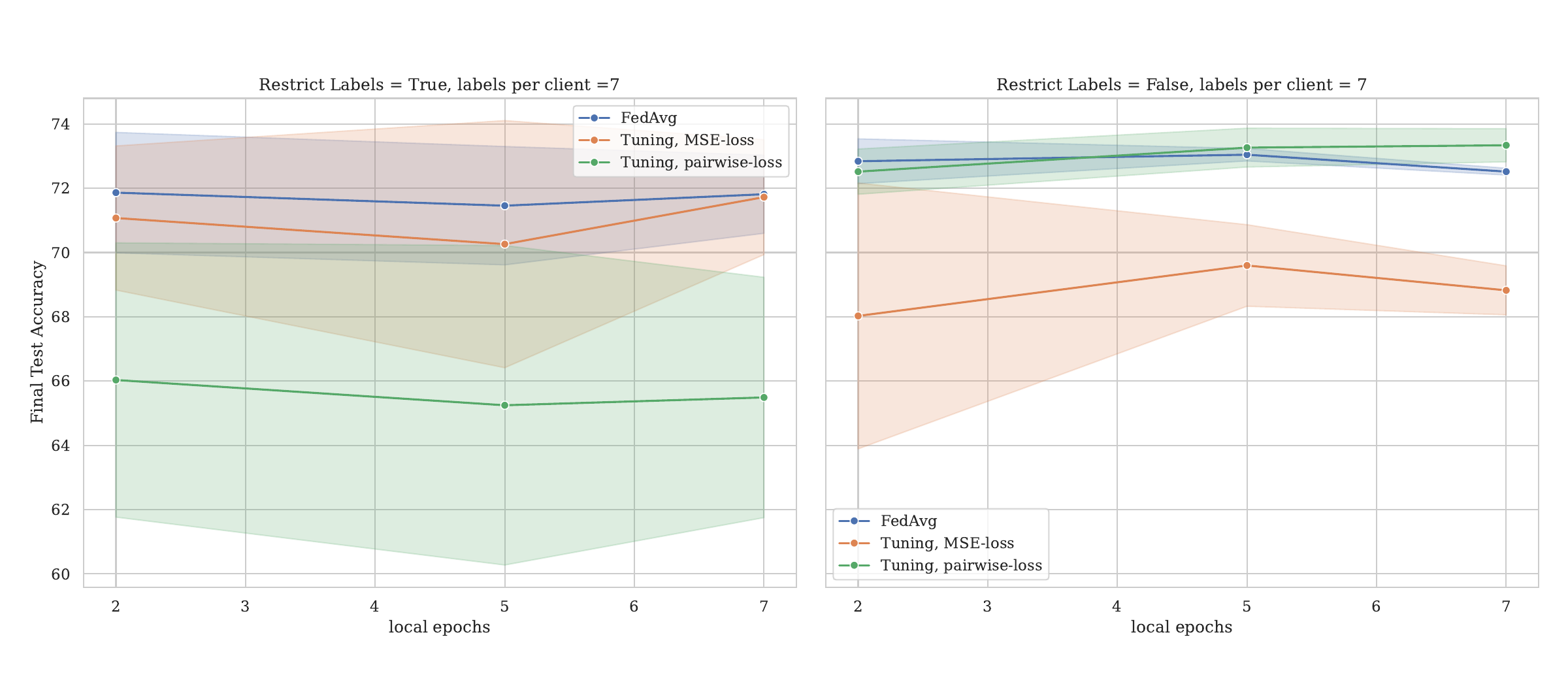}
    \caption{Ablation with differing amounts of local epochs on CIFAR10. We do not observe any particular drop in performance for either FedAvg or the tuning methods.}
    \label{fig:cifar_localepochs}
\end{figure}
     \begin{figure}
    \centering
    \includegraphics[width=\textwidth]{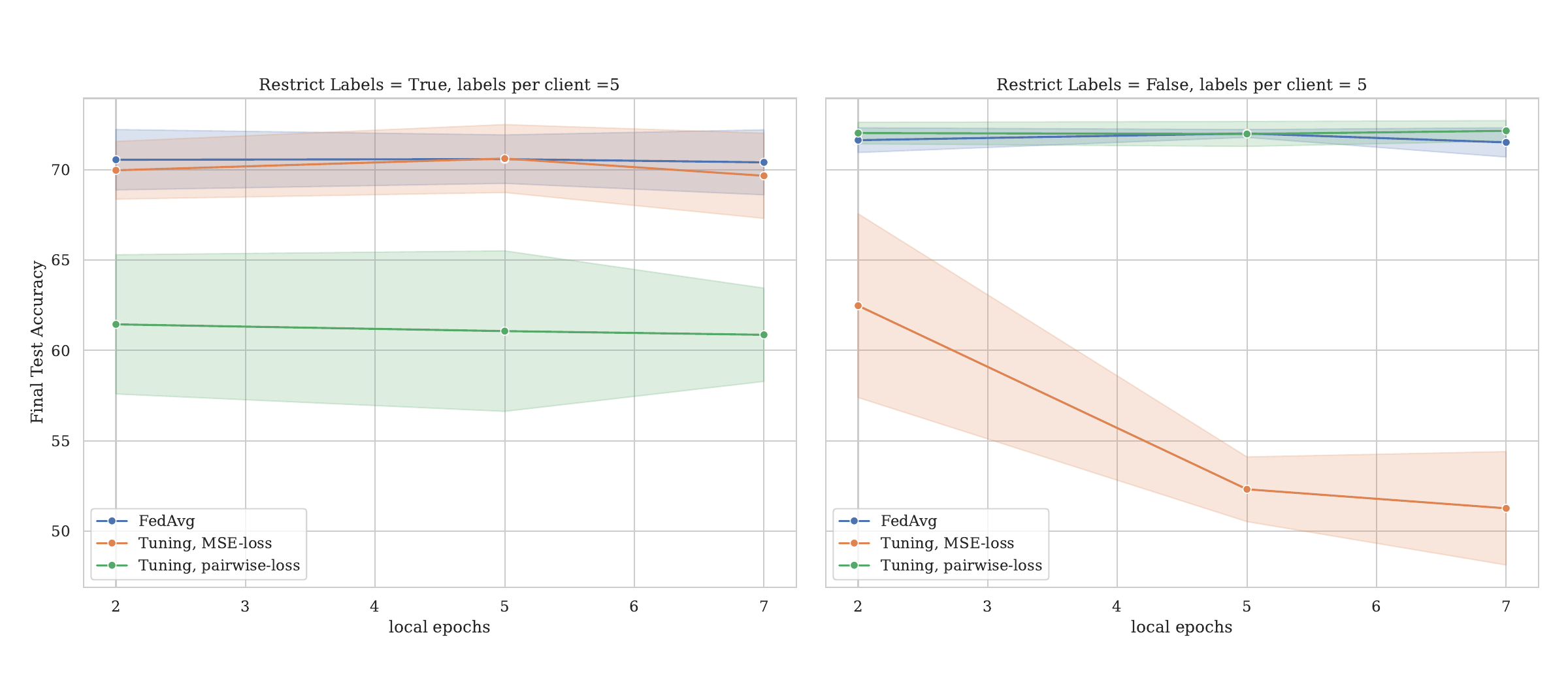}
    \caption{Comparison of different labels per client on CIFAR10 with E=5.}
    \label{fig:cifar_localepochs5}
\end{figure}
 \begin{figure}
    \centering
    \includegraphics[width=\textwidth]{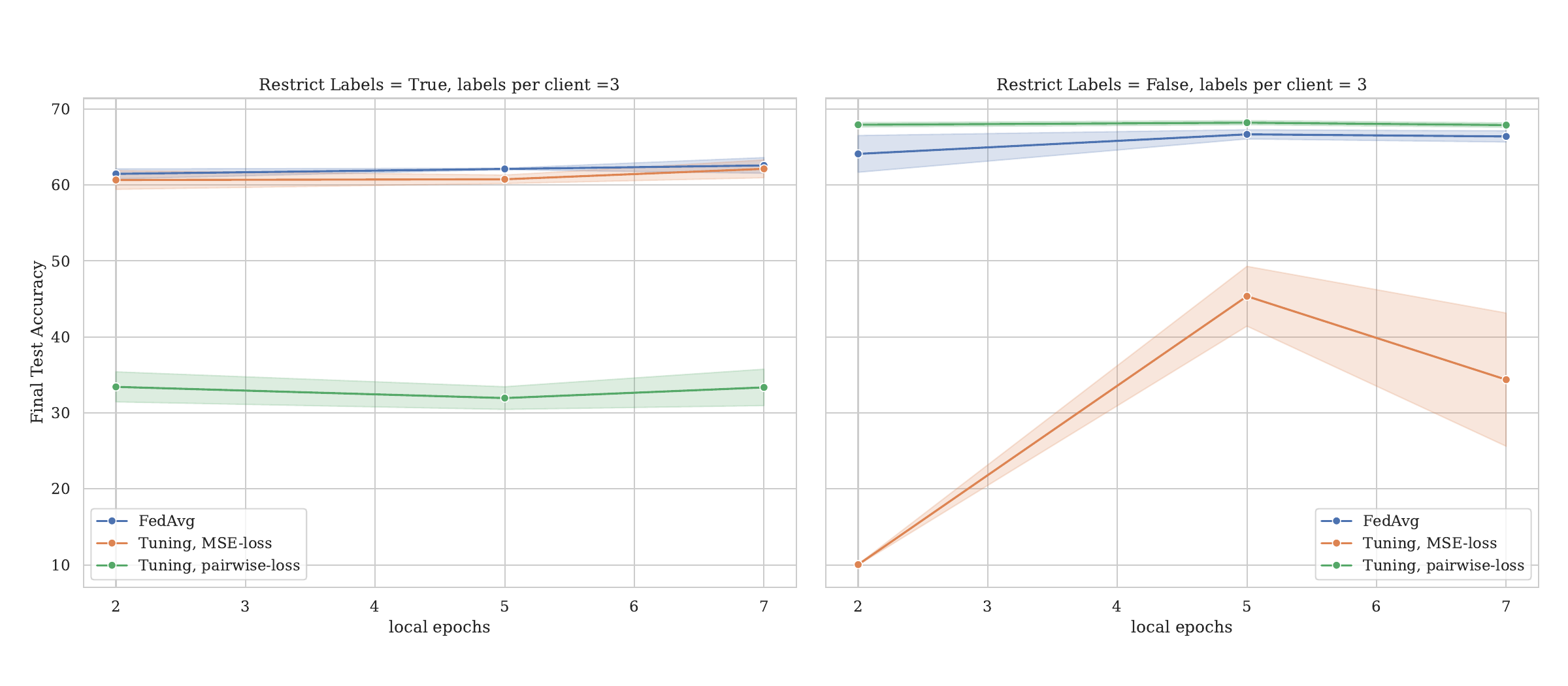}
    \caption{Comparison of different labels per client on CIFAR10 with E=3.}
    \label{fig:cifar_localepochs3}
\end{figure}
\end{appendix}
\end{document}